\documentclass[10pt]{article} 
\usepackage[preprint]{tmlr}


\usepackage{amsmath,amsfonts,bm}









\def\eqref#1{equation~\ref{#1}}









\def\1{\bm{1}}










\DeclareMathAlphabet{\mathsfit}{\encodingdefault}{\sfdefault}{m}{sl}
\SetMathAlphabet{\mathsfit}{bold}{\encodingdefault}{\sfdefault}{bx}{n}













\usepackage{bm}
\usepackage{hyperref}
\usepackage{url}
\usepackage{amsmath, amssymb, amsthm, amsfonts}
\usepackage{array}
\usepackage{algorithm}
\usepackage[noend]{algpseudocode}
\usepackage{natbib}    
\usepackage{caption}   
\usepackage{graphicx}
\usepackage{multirow}
\usepackage{tabularx}
\usepackage{makecell}
\usepackage{tcolorbox}
\tcbuselibrary{listings}
\usepackage{float}
\usepackage{ragged2e}
\usepackage{graphicx}
\usepackage{cuted}
\usepackage{xcolor}    
\usepackage{enumitem}
\usepackage{subcaption}
\usepackage{booktabs}
\usepackage{url}
\usepackage{etoolbox}
\usepackage{wrapfig}
\usepackage{sidecap}
\usepackage{subcaption}

\newtheorem{theorem}{Theorem}[section]
\newtheorem{lemma}[theorem]{Lemma}

\newtheorem{proposition}[theorem]{Proposition}

\title{SWAN: Sparse Winnowed Attention for Reduced Inference Memory via Decompression-Free KV-Cache Compression}


\author{\name Santhosh G S \email santhoshgs013@gmail.com \\
      \addr Centre for Responsible AI\\
      Indian Institute of Technology Madras
      \AND
      \name Saurav Prakash \email saurav@ee.iitm.ac.in \\
      \addr Department of Electrical Engineering\\
      Indian Institute of Technology Madras
      \AND
      \name Balaraman Ravindran \email ravi@dsai.iitm.ac.in\\
      \addr Wadhwani School of Data Science and Artificial Intelligence\\
      Indian Institute of Technology Madras}



\begin{document}

\maketitle

\begin{abstract}
Large Language Models (LLMs) face a significant bottleneck during autoregressive inference due to the massive memory footprint of the Key-Value (KV) cache. Existing compression techniques like token eviction, quantization, or other low-rank methods often risk information loss, have fixed limits, or introduce significant computational overhead from explicit decompression steps. In this work, we introduce SWAN, a novel, fine-tuning-free framework that eliminates this overhead. Our method uses an offline orthogonal matrix to rotate and prune the KV-cache, which is then used directly in the attention computation without any reconstruction. Our extensive experiments demonstrate that SWAN, augmented with a small dense buffer, offers a robust trade-off, maintaining performance close to the uncompressed baseline even at aggressive 50-60\% memory savings per-token on KV-cache. A key advantage is its runtime-tunable compression level, allowing operators to dynamically adjust the memory footprint, a flexibility absent in methods requiring fixed offline configurations. This combination of a decompression-free design, high performance under compression, and adaptability makes SWAN a practical and efficient solution for serving LLMs with long contexts.
\end{abstract}

\section{Introduction}
\label{sec:introduction}

The remarkable capabilities of Large Language Models (LLMs) are fundamentally linked to their ability to process extensive context lengths \citep{vaswani2023attentionneed, bai2024longbenchbilingualmultitaskbenchmark}, enabling sophisticated tasks like summarizing long documents. However, this advancement creates a severe inference bottleneck. During autoregressive generation, a Key-Value (KV) cache is essential for storing intermediate attention states to avoid costly recomputation with each new token. For long sequences, the memory required for this cache can vastly exceed that of the model weights, making it the primary performance bottleneck. For instance, a Llama-2 7B model processing a 32k token sequence with a batch size of 16 requires approximately 14\,GB for its weights but a staggering 256\,GB for its KV cache.

To address this bottleneck, researchers have pursued two main directions. \textbf{Architectural methods} such as Grouped-Query Attention (GQA) \citep{ainslie2023gqatraininggeneralizedmultiquery} improve efficiency but require pre-training. \textbf{Post-training approaches} are more flexible but introduce limitations: quantization \citep{hooper2025kvquant10millioncontext} imposes a fixed compression ceiling, while token eviction \citep{zhang2023h2oheavyhitteroracleefficient} risks irreversible information loss, particularly harmful for tasks with long-range dependencies.

A promising direction and the focus of our work leverages a fundamental property of the attention mechanism: the inherent low-rank structure of key and value vectors \citep{singhania2024lokilowrankkeysefficient, saxena2024eigenattentionattentionlowrank}. This redundancy in the standard KV-cache opens the door to significant compression. By first using an orthogonal rotation to concentrate the most salient information into fewer dimensions, we can then prune the vector with minimal information loss. Building on this observation, our contributions are as summarized below:

\begin{itemize}
    \item \textbf{A novel, decompression-free framework for KV-cache compression.} We introduce SWAN (Sparse Winnowed Attention), which performs attention directly on a compressed, sparse KV-cache. This eliminates the conventional reconstruction step, removing a significant source of computational overhead present in some prior low-rank methods.

    \item \textbf{A unified approach for simultaneous memory and compute savings.} By using an offline SVD-derived matrix to rotate and prune KV vectors, SWAN's sparse cache can be multiplied directly with dense queries. This design inherently reduces both the memory footprint and the FLOPs required for the attention inner product.

    \item \textbf{Practical runtime adaptability.} SWAN's pruning threshold is a tunable hyperparameter that can be adjusted at inference time, allowing operators to dynamically balance the trade-off between performance and compression without any offline model modification, a crucial flexibility for real-world serving environments.
    
    \item \textbf{Theoretical and empirical validation.} We provide a theoretical analysis that establishes a clear break-even point for computational savings. We complement this with a detailed empirical analysis on a wide range of NLP benchmarks and model architectures, demonstrating SWAN's effectiveness and robustness in real-world scenarios.
\end{itemize}

\section{Related Works}
\label{related_works}

Efforts to mitigate the KV-cache bottleneck in LLM inference can be broadly categorized into architectural modifications, system-level optimizations, and post-training compression. 

\textbf{Architectural changes} like Multi-Query and Grouped-Query Attention (MQA/GQA) \citep{shazeer2019fasttransformerdecodingwritehead, ainslie2023gqatraininggeneralizedmultiquery} reduce the cache size from the outset but are inapplicable to the vast number of pre-trained models. Orthogonally, system-level solutions like PagedAttention \citep{kwon2023efficientmemorymanagementlarge} optimize memory management but do not reduce the fundamental size of the cache itself. \textbf{Post-training strategies} offer more adaptable solutions. These include Token Eviction methods like StreamingLLM \citep{xiao2024efficientstreaminglanguagemodels} and H2O \citep{zhang2023h2oheavyhitteroracleefficient}, which discard KV pairs but risk a permanent loss of critical information. Quantization techniques such as KVQuant \citep{hooper2025kvquant10millioncontext} and KIVI \citep{https://doi.org/10.13140/rg.2.2.28167.37282} reduce the numerical precision of the cache, but are constrained by a hard upper limit on their compression ratio. \textbf{Low-Rank Approximation}, the most relevant area to our work, leverages the insight that KV vectors occupy a low-dimensional subspace. However, existing methods in this space have notable limitations. Approaches like SparQ Attention \citep{ribar2024sparqattentionbandwidthefficientllm}, AQUA Attention \citep{s2025aquaattentionquerymagnitudes}, and Loki \citep{singhania2024lokilowrankkeysefficient} prioritize computational savings over memory reduction. Others, like Eigen Attention \citep{saxena2024eigenattentionattentionlowrank}, tackle the memory issue but require modifying model weights offline for a fixed compression level, sacrificing crucial runtime flexibility. Frameworks such as Lexico \citep{kim2024lexicoextremekvcache} introduce significant latency by relying on separate compression and decompression steps at every single decoding stage. Our work, SWAN, fills this void with a decompression-free framework that uses a pruned, sparse cache directly in the attention computation. This eliminates reconstruction latency and offers a flexible, runtime-tunable threshold for a balance between performance and compression.

\section{Standard Multi-Headed Attention}
\label{sec:standard_mha }
To establish context for our methodology, we first review the standard self-attention mechanism, which is the core component of Transformer-based LLMs. For a given input sequence of embeddings $X \in \mathbb{R}^{n \times d}$, where $n$ is the sequence length and $d$ is the model dimension, the attention mechanism computes three intermediate representations: the Query ($Q$), Key ($K$), and Value ($V$) matrices. These are derived through linear projections using learned weight matrices $W_Q, W_K, W_V \in \mathbb{R}^{d \times d_h}$, where $d_h$ is the head dimension: $Q = X W_Q, \quad K = X W_K, \quad V = X W_V $. The output of a single attention head is then calculated using the scaled dot-product attention formula \citep{vaswani2023attentionneed}: $\text{Attention}(Q, K, V) = \text{softmax}\left(\frac{QK^T}{\sqrt{d_h}}\right)V$.

In practice, Transformers employ a Multi-Head Attention (MHA) mechanism \cite{vaswani2023attentionneed}, which allows the model to jointly attend to information from different representation subspaces. This is achieved by running the attention mechanism in parallel across $N_h$ independent heads. Each head $j$ has its own set of learned projection matrices $W_Q^{(j)}, W_K^{(j)}, W_V^{(j)} \in \mathbb{R}^{d \times d_h}$. The output of each head is computed as: $ \text{head}_j = \text{Attention}(XW_Q^{(j)}, XW_K^{(j)}, XW_V^{(j)}) $.
The outputs of all heads are then concatenated and projected back to the original model dimension $d$ using an output weight matrix $W_O \in \mathbb{R}^{N_h d_h \times d}$: $\text{MHA}(X) = \text{Concat}(\text{head}_1, \dots, \text{head}_{N_h}) W_O$.
The inference process for an LLM is typically divided into two distinct phases: the prompting (or prefill) phase and the autoregressive decoding phase.

\subsection{Prompting Phase}
In the prompting phase, the model processes the entire input prompt of length $n$ in parallel. For each head, the matrices $Q, K, V \in \mathbb{R}^{n \times d_h}$ are computed for all tokens at once. The attention scores are computed via a large matrix multiplication ($QK^T$), resulting in an attention output for every token in the prompt. This phase is compute-bound due to the large matrix operations. Crucially, at the end of this phase, the computed key and value matrices, $K$ and $V$ for all $N_h$ heads, are stored in memory. This stored data is referred to as the KV-cache.

\subsection{Autoregressive Decoding Phase}
Following the prompt processing, the model generates output tokens one at a time. For each new token $i+1$, a single new query vector $q_{i+1} \in \mathbb{R}^{1 \times d_h}$ is computed for each head from the embedding of the previously generated token $i$. Similarly, new key ($k_{i+1}$) and value ($v_{i+1}$) vectors are computed. These new vectors are then appended to the existing KV-cache for each respective head: $
    K_{\text{cache}}^{(i+1)} = [K_{\text{cache}}^{(i)}; k_{i+1}], \quad V_{\text{cache}}^{(i+1)} = [V_{\text{cache}}^{(i)}; v_{i+1}]
$. 
The attention output for the new token is then calculated using the new query and the entire history stored in the updated cache for each head:
$
    o_{i+1}^{(j)} = \text{softmax}\left(\frac{q_{i+1}^{(j)} (K_{\text{cache}}^{(i+1, j)})^T}{\sqrt{d_h}}\right)V_{\text{cache}}^{(i+1, j)}
$.
This phase is memory-bandwidth bound. At each generation step, the entire KV-cache, which grows with every new token, must be read from high-bandwidth memory (HBM) into the GPU's faster on-chip SRAM. This data movement constitutes latency and memory bottleneck for long sequences, which our work aims to alleviate.

\section{Methodology}
\label{sec:methodology}
Our proposed framework, SWAN as illustrated in Figure \ref{fig:swan_mechanism}, operates by rotating the key and value vectors into a low-dimensional subspace where information is maximally concentrated in the initial dimensions. This allows for effective pruning of the later, less important dimensions, leading to significant savings in both memory and computation without a reconstruction step. The core of our method is the offline creation of a powerful projection matrix.

\subsection{Construction of Projection Matrices}
The projection matrix is designed to find a basis that aligns the related components of the attention mechanism. Instead of learning separate bases for queries and keys (or values and outputs), we construct a unified basis for their joint distributions. This is a one-time, offline process.

\begin{figure}[t!]
    \centering
    \includegraphics[width=0.95\textwidth]{./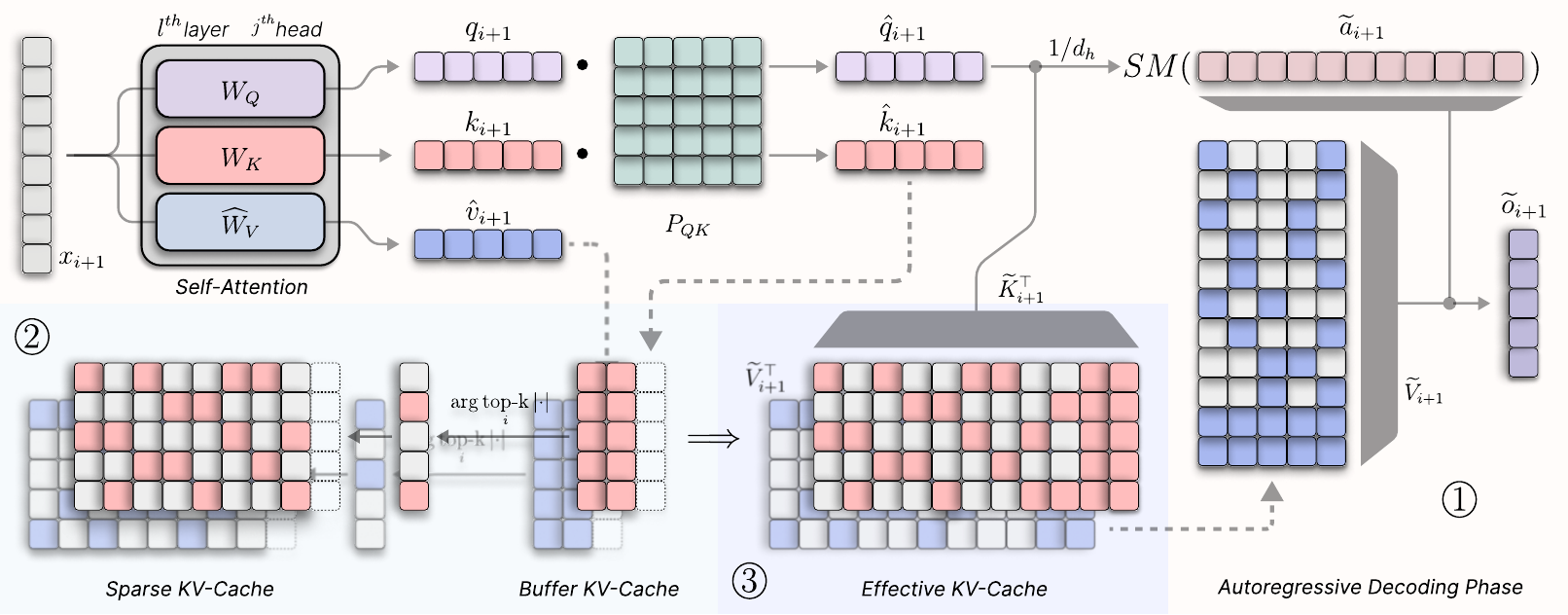}
    \caption{An illustration of the SWAN attention mechanism during a single autoregressive decoding step for token $i+1$. The process begins with the input $x_{i+1}$, where the query ($q_{i+1}$) and key ($k_{i+1}$) are projected at runtime by the orthogonal matrix $P_{QK}$ to produce their rotated counterparts, $\hat{q}_{i+1}$ and $\hat{k}_{i+1}$. The value vector is generated directly in the rotated space as $\hat{v}_{i+1}$ using the pre-modified weight matrix $\widehat{W}_V$. The core of our method is the hybrid KV-cache, composed of two parts: (2) a \textbf{Sparse KV-Cache} storing pruned historical vectors, and a small, dense \textbf{Buffer KV-Cache} for recent vectors. As new vectors ($\hat{k}_{i+1}, \hat{v}_{i+1}$) enter the buffer, the oldest buffer vector is pruned based on magnitude (`arg top-k') and moved to the sparse cache. The final attention output ($\tilde{o}_{i+1}$) is computed using the rotated query $\hat{q}_{i+1}$ and the (3) \textbf{Effective KV-Cache}, which is the combination of both sparse and buffer caches, thus avoiding any decompression overhead.}
    \label{fig:swan_mechanism}
\end{figure}

\subsubsection{Collecting Model's Activations}

A representative set of internal activations is collected from the target LLM. A subset of calibration dataset (e.g., BookCorpus) is processed through the model in a single forward pass. For each attention layer $l$, we extract the following: 1. Query ($Q^{(l)}$), and Key ($K^{(l)}$) matrices after the application of Rotary Positional Embeddings (RoPE) \citep{su2023roformerenhancedtransformerrotary}, as this reflects their state just before the attention score calculation, 2. Value ($V^{(l)}$) matrices and the output projection weights ($W_O^{(l)}$).

Our method is compatible with both Multi-Head Attention (MHA) and Grouped-Query Attention (GQA). In MHA, the number of query heads ($N_q$) equals the number of key-value heads ($N_{kv}$). In GQA, multiple query heads share a single KV-head ($N_q > N_{kv}$). To create a shared subspace that respects these architectures, we must align the interacting components.

For each KV-head, we group the corresponding $G = N_q / N_{kv}$ query heads. This is achieved by reshaping the query tensor. For a layer $l$, the query tensor $Q^{(l)} \in \mathbb{R}^{N_q \times n \times d_h}$, where $n$ is the number of tokens in the calibration sequence, is reshaped to $Q_{grouped}^{(l)} \in \mathbb{R}^{N_{kv} \times (n \cdot G) \times d_h}$.

A similar alignment is required for the output projection matrix, $W_O^{(l)} \in \mathbb{R}^{N_h d_h \times d}$. This matrix projects the concatenated outputs of all heads back to the model's dimension. Conceptually, we can slice this matrix into $N_h$ sub-matrices, $\{W_O^{(l,1)}, \dots, W_O^{(l,N_h)}\}$, where each sub-matrix $W_O^{(l,j)} \in \mathbb{R}^{d_h \times d}$ is responsible for projecting the output of its respective head $j$. To align these components with the value heads for our joint subspace construction, these sub-matrices are grouped in the same manner as the queries, resulting in $N_{kv}$ groups of output weight components, which we denote $W_{O, grouped}^{(l)}$.

\subsubsection{Forming Joint Subspaces and Deriving the Basis}
The central idea is to find a common basis for semantically related tensors. We create two joint matrices for each layer $l$ and each KV-head group $j$: one for Query-Key ($\mathbf{S}_{QK}^{(l,j)}$) and one for Value-Output ($\mathbf{S}_{VO}^{(l,j)}$). These are formed by concatenating the respective (grouped) tensors along the sequence dimension:
$
    \mathbf{S}_{QK}^{(l,j)} = \text{Concat}(Q_{grouped}^{(l,j)}, K^{(l,j)})
$,
$
    \mathbf{S}_{VO}^{(l,j)} = \text{Concat}(V^{(l,j)}, W_{O, grouped}^{(l,j)\top})
$.

We then apply Singular Value Decomposition (SVD) \citep{1102314} to these joint matrices to find their principal components. For a given joint matrix $\mathbf{S}$, SVD provides the decomposition $\mathbf{S} = U \Sigma V^T$. The columns of the right singular matrix $V$ form an orthonormal basis for the row space of $\mathbf{S}$, capturing the directions of greatest variance in descending order. This matrix $V$ becomes our projection matrix: 
$
    P_{QK}^{(l,j)} = \text{SVD}(\mathbf{S}_{QK}^{(l,j)})_V, \quad P_{VO}^{(l,j)} = \text{SVD}(\mathbf{S}_{VO}^{(l,j)})_V
$.
These projection matrices, $P_{QK}, P_{VO} \in \mathbb{R}^{d_h \times d_h}$, are computed once offline and stored. During inference, they are used to rotate the attention components into a space where pruning can be performed effectively.

\subsection{Applying Projection Matrices to Model Weights}
To minimize runtime overhead, the projection matrices can be intelligently ``absorbed'' into the model's original weights before inference begins. However, this optimization is only possible for certain components due to the operational order within the attention mechanism.

The projection for the Value and Output components, $P_{VO}$, can be pre-applied. The original value vector $v$ is produced by $v = x W_V$. The rotated vector is $\hat{v} = v P_{VO}$. We can absorb this rotation into a new weight matrix $\widehat{W}_{V}$ such that $\hat{v} = x \widehat{W}_{V}$. This gives 
$
    \widehat{W}_{V}^{(l,j)} = W_V^{(l,j)} P_{VO}^{(l,j)}
$. A corresponding transformation is applied to the output projection matrix $W_O$. First, we slice $W_O$ into sub-matrices $\{W_{O}^{(l,1)}, \dots, W_{O}^{(l,N_h)}\}$, one for each of the $N_h$ heads. Since the output of each head will now be in the rotated space ($\hat{o}_j = o_j P_{VO}$), we must modify each weight slice to account for this rotation. The new weight slice for each head $j$ is calculated by pre-multiplying with the transpose of the corresponding projection matrix:
$\widehat{W}_{O}^{(l,j)} = (P_{VO, expanded}^{(l,j)})^T W_O^{(l,j)}$. Here, $P_{VO, expanded}$ is the projection matrix repeated for each query head within a GQA group. These modified sub-matrices are then concatenated back together to form the single new output projection matrix for the layer, $\widehat{W}_{O}^{(l)}$. This absorption ensures that the value vectors are directly generated in the rotated space and the final projection accounts for this rotation, all with zero computational overhead during inference.

This pre-loading, however, is not feasible for the Query and Key weights. In modern LLMs, Rotary Positional Embeddings (RoPE) \citep{su2023roformerenhancedtransformerrotary} are applied to queries and keys \textit{after} their initial projection by $W_Q$ and $W_K$. Since our projection matrices were derived from RoPE-applied vectors, the projection must also occur after RoPE. The sequence of operations is $q \rightarrow \text{RoPE}(q) \rightarrow \text{RoPE}(q) P_{QK}$. Critically, RoPE is a dynamic, position-dependent linear transformation, and matrix multiplication is not commutative. It is not possible to find a static matrix $\widehat{W}_{Q}$ such that $\text{RoPE}(x \widehat{W}_{Q}) = \text{RoPE}(x W_Q) P_{QK}$ for all positions. Consequently, the projection $P_{QK}$ must be applied to the query and key vectors at runtime during each decoding step. This introduces a small computational cost. However, this fixed cost is quickly amortized by the substantial computational savings from the subsequent sparse attention calculation. As the sequence length grows, the savings in the dot-product computation, which scale with the sequence length, increasingly outweigh the initial projection cost. We will later provide a theoretical bound for the sequence length at which net computational savings begin. We formally prove that this entire rotation process is lossless in Appendix~\ref{appendix:proofs_lossless_rotation}, demonstrating that the only source of approximation error in our method is the subsequent dimension pruning.

\begin{algorithm}[!t]
\caption{SWAN Attention (for token i+1)}
\label{algo:swan_attention_algo}
\begin{algorithmic}[1]
\Require Current query $q_{i+1}$, key $k_{i+1}$, value $\hat{v}_{i+1}$ (already projected).
\Require KV-Cache: $K_{buffer}, K_{sparse}, V_{buffer}, V_{sparse}$.
\Require Projection matrix $P_{QK}$, buffer size $b$, top-k dims $k_{active}$.
\State $\hat{q}_{i+1} \leftarrow q_{i+1}P_{QK}$ \Comment{Project current query}
\State $\hat{k}_{i+1} \leftarrow k_{i+1}P_{QK}$ \Comment{Project current key}
\State Append $\hat{k}_{i+1}$ to $K_{buffer}$, $\hat{v}_{i+1}$ to $V_{buffer}$.
\If{size of $K_{buffer} > b$}
    \State $\hat{k}_{old} \leftarrow$ Pop oldest from $K_{buffer}$.
    \State $\hat{v}_{old} \leftarrow$ Pop oldest from $V_{buffer}$.
    \State $I_k \leftarrow \text{arg TopK}(|\hat{k}_{old}|, k_{active})$ \Comment{Find top-k indices for key}
    \State $k_{sparse} \leftarrow \text{Sparse}(\hat{k}_{old}, I_k)$ \Comment{Create sparse vector}
    \State Append $k_{sparse}$ to $K_{sparse}$.
    \State $I_v \leftarrow \text{arg TopK}(|\hat{v}_{old}|, k_{active})$ \Comment{Find top-k indices for value}
    \State $v_{sparse} \leftarrow \text{Sparse}(\hat{v}_{old}, I_v)$
    \State Append $v_{sparse}$ to $V_{sparse}$.
\EndIf
\State $K_{cache} \leftarrow \text{Concat}(K_{sparse}, K_{buffer})$
\State $V_{cache} \leftarrow \text{Concat}(V_{sparse}, V_{buffer})$
\State $S \leftarrow \hat{q}_{i+1}K_{cache}^T$ \Comment{Sparse-dense \& dense-dense mat-vec products}
\State $o_{i+1} \leftarrow \text{softmax}(S/\sqrt{d_h}) V_{cache}$ \Comment{Compute attention output}
\State \Return $o_{i+1}$
\end{algorithmic}
\end{algorithm}

\subsection{Inference-Time Pruning and Attention Computation}
With the projection matrices established and partially absorbed, our method modifies the standard autoregressive decoding loop. The core innovation is a hybrid KV-cache strategy that combines a large, sparse cache for historical tokens with a small, dense buffer for recent ones. Consistent with findings in prior work \citep{https://doi.org/10.13140/rg.2.2.28167.37282, kang2024gearefficientkvcache, kim2024lexicoextremekvcache}, we observe that maintaining this small buffer of recent tokens in their dense format is crucial for preserving model performance, a finding we empirically validate in our results section.

At each generation step $i+1$, the new query $q_{i+1}$ and key $k_{i+1}$ are projected into the rotated space at runtime using $P_{QK}$. The new value vector $\hat{v}_{i+1}$ is generated directly in the rotated space, thanks to the absorbed projection in $\widehat{W}_V$. These new, dense projected vectors are temporarily stored in the fixed-size buffer. When the buffer exceeds its capacity, the oldest dense vectors are evicted, pruned, and added to the main sparse cache.

The pruning process is based on magnitude. For an evicted vector (e.g., $\hat{k}_{old}$), we identify the top-$k$ dimensions with the highest absolute values, where $k$ is a tunable hyperparameter controlling the compression ratio. All other dimensions are discarded. The vector is then converted to a sparse representation (e.g., storing only the indices and values of the top-$k$ elements) and appended to the historical sparse cache. This procedure is summarized as Algorithm \ref{algo:swan_attention_algo}.

This approach yields significant computational savings. The attention score calculation involves multiplying the dense query vector with the hybrid key cache. The operation on the sparse portion of the cache is a sparse-dense matrix-vector product, which is computationally much cheaper than a standard dense-dense operation.

Furthermore, to achieve higher compression rates with minimal performance loss, the non-zero values stored in the sparse vectors can be quantized to 8-bit float. This allows us to retain a larger number of components (a higher $k_{active}$) for the same memory budget. Crucially, unlike token eviction strategies that cause complete information loss for discarded tokens, our method ensures that some information from every token is retained in the cache, preserving a more complete history of the sequence context.

\section{Complexity Analysis}
\label{sec:complexity_analysis}

\subsection{Space Complexity}
The memory savings of our method come from the sparse representation of historical key and value vectors. The sparse vectors are stored using the Compressed Sparse Row (CSR) array format. A single vector is stored by keeping only its $k_{active}$ most significant components as a tuple of values and their corresponding indices, plus a constant 2-byte offset array.

Assuming a typical head dimension $d_h=128$, the indices can be stored using 8-bit integers (\texttt{int8}), as they range from 0 to 127. The values are typically stored as 16-bit floating-point numbers (\texttt{float16}). Thus, the memory required for one sparse vector is:
\begin{equation}
    M_{sparse} = k_{active} \cdot (\text{sizeof}(\texttt{float16}) + \text{sizeof}(\texttt{int8})) + 2 = 3k_{active} + 2 \text{ bytes}
\end{equation}
In contrast, a standard dense vector requires $M_{dense} = d_h \cdot \text{sizeof}(\texttt{float16}) = 128 \cdot 2 = 256$ bytes. The compression ratio for the sparse portion of the cache is therefore $\frac{3k_{active} + 2}{256}$.

For aggressive compression, the values can be further quantized to 8-bit floats (\texttt{float8}). This reduces the memory per vector to $M_{sparse,8bit} = k_{active} \cdot (1 + 1) + 2 = 2k_{active} + 2$ bytes. This allows us to retain more components for the same memory budget, which is critical for performance at high compression ratios, as it preserves more information per token compared to simply reducing $k_{active}$.

\subsection{Computational Complexity}
\label{sec:complexity_analysis_main}
We theoretically analyze the computational complexity by comparing the number of floating-point operations (FLOPs) required for the full attention computation within a single head at a decoding step for a sequence of length $L$. Our analysis reveals that SWAN becomes computationally more efficient than standard attention once the sequence length surpasses a predictable threshold. This break-even point is defined in terms of the head dimension ($d_h$), the dense buffer size ($b$), and the number of active dimensions retained after pruning ($k_{active}$), and is reached when the sequence length $L$ satisfies the following condition:


\begin{equation}
L > \frac{d_h^2}{d_h - k_{active}} + b
\label{eq:break_even}
\end{equation}

Equation~\ref{eq:break_even} provides a powerful and practical guide for deploying SWAN. It clearly defines the trade-off between the initial overhead and the accumulated savings. The term $d_h^2$ represents the fixed, one-time computational cost of the runtime projections at each step. The denominator, $d_h - k_{active}$, represents the per-token computational savings achieved by operating on sparse vectors. Finally, $b$ accounts for the portion of the computation that remains dense due to the buffer.

This relationship demonstrates a key advantage of our method: the more aggressively we prune the vectors (i.e., the smaller the $k_{active}$), the larger the per-token savings become, and the shorter the sequence length required to ``pay off'' the initial projection overhead. This formula allows operators to precisely determine the context length at which SWAN not only saves memory but also begins to accelerate inference, making it a highly predictable and advantageous solution for long-context applications. The full mathematical derivation for this result is provided in Appendix~\ref{app:complexity_derivation}.

\begin{figure}[t!]
    \centering
    \begin{minipage}{0.8\linewidth} 
        \centering
        \begin{subfigure}{0.4\linewidth}
            \centering
            \includegraphics[width=\linewidth]{./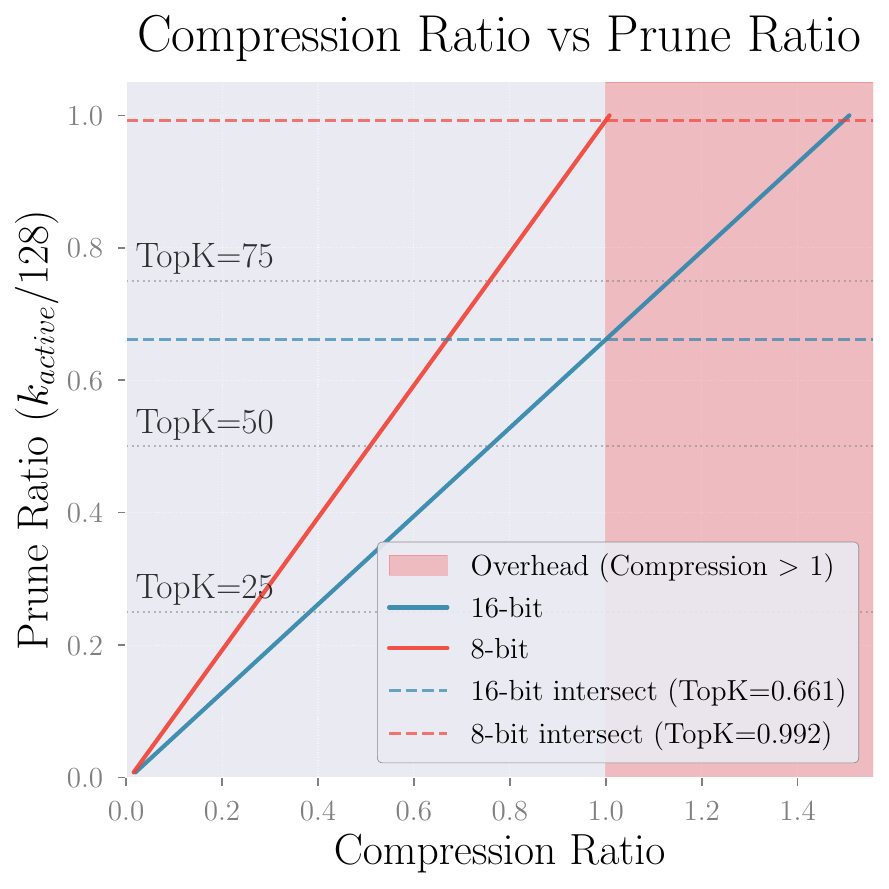}
            \caption{Compression-Pruning tradeoff}
            \label{fig:compression_tradeoff}
        \end{subfigure}\hfill
        \begin{subfigure}{0.52\linewidth}
            \centering
            \includegraphics[width=\linewidth]{./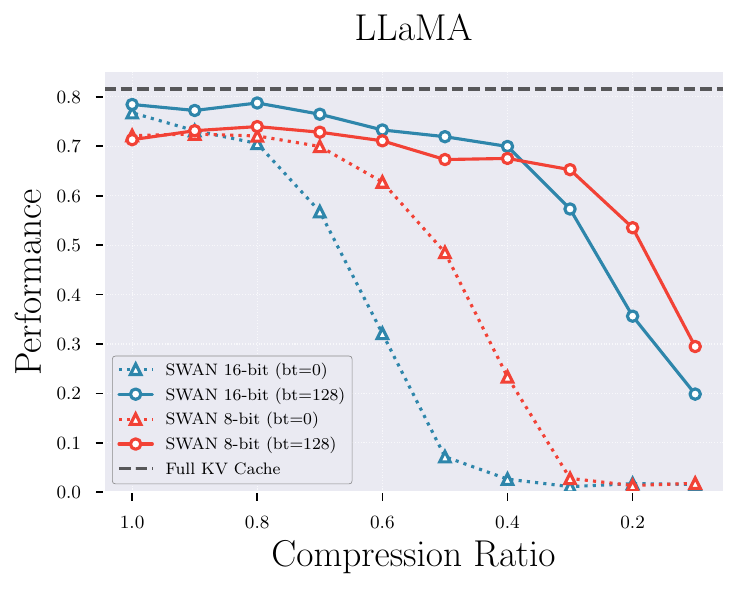}
            \caption{Llama-3.1-8B GSM8K results}
            \label{fig:gsm8k_results}
        \end{subfigure}
    \end{minipage}
    \caption{(a) Relationship between pruning ratio (dimensions retained) and effective memory compression. The shaded region indicates where the sparse representation is larger than the dense original. For 16-bit values, savings begin only when the retention ratio is below 0.66, this threshold is significantly lower when using 8-bit quantized values and is almost one-to-one. (b) Performance of SWAN variants of \texttt{Llama-3.1-8B-Instruct} on GSM8K reasoning benchmark. The buffered SWAN variants (`bt=128') demonstrate strong resilience, significantly outperforming the zero-buffer versions}
\end{figure}

\section{Experimental Results}
\label{sec:results}

\subsection{The Compression-Pruning Trade-off}

In our method, pruning dimensions does not linearly translate to memory savings due to the overhead of storing sparse indices. Figure~\ref{fig:compression_tradeoff} illustrates this critical relationship. For standard 16-bit values, we must prune over 34\% of dimensions (i.e., retain fewer than 66\%) just to break even on memory usage. Fortunately, our empirical results (see Appendix~\ref{app:perf_vs_pruning}) show that SWAN is highly resilient to such pruning, maintaining strong performance even when retaining only half of the original dimensions. This confirms that achieving significant memory savings is practical. Figure~\ref{fig:compression_tradeoff} also reveals that quantizing values to 8-bit floats makes the compression far more efficient. This insight frames a central question for our experiments: for a given memory budget, is it better to keep fewer, high-precision 16-bit dimensions, or a larger number of less-precise 8-bit ones? The following sections are designed to answer this.

\subsection{Performance on Reasoning Task}

Multi-step reasoning tasks like GSM8K serve as a powerful stress-test for KV-cache compression, as they are exceptionally unforgiving to information loss that can break a model's chain of thought. We use this benchmark to demonstrate SWAN's resilience under pressure on \texttt{Llama-3.1-8B-Instruct} \citep{grattafiori2024llama3herdmodels} model. The full details of the benchmarks and the corresponding settings in present and subsequent sub-sections are provided in Appendix~\ref{appendix:benchmarks}.

The results, presented in Figure~\ref{fig:gsm8k_results}, highlight the critical role of our hybrid cache design. The zero-buffer variants (`bt=0') suffer a catastrophic performance collapse, confirming that for complex reasoning, a high-fidelity buffer of recent context is non-negotiable. With a 128-token buffer, however, SWAN is remarkably robust. The 16-bit buffered variant maintains near-baseline performance even at a 50\% compression ratio (0.5 on the x-axis), proving that our method can preserve complex reasoning capabilities with substantial memory savings.

This task also reveals a fascinating trade-off between precision and the number of retained dimensions. Initially, the 16-bit SWAN variant has a slight edge, where its numerical precision is beneficial. However, as we push compression below a 40\% ratio, a clear crossover occurs: the 8-bit variant becomes superior. This demonstrates that in extreme memory-constrained scenarios, the information lost from discarding too many dimensions (the 16-bit case) is more damaging than the precision lost from quantization. At this point, retaining a broader, albeit less precise, set of dimensions provides a decisive advantage for maintaining the logical chain.

\subsection{Performance on Standard NLP Benchmarks}
\label{sec:nlp_benchmarks}

\begin{figure*}[t!]
    \centering
    \includegraphics[width=\textwidth]{./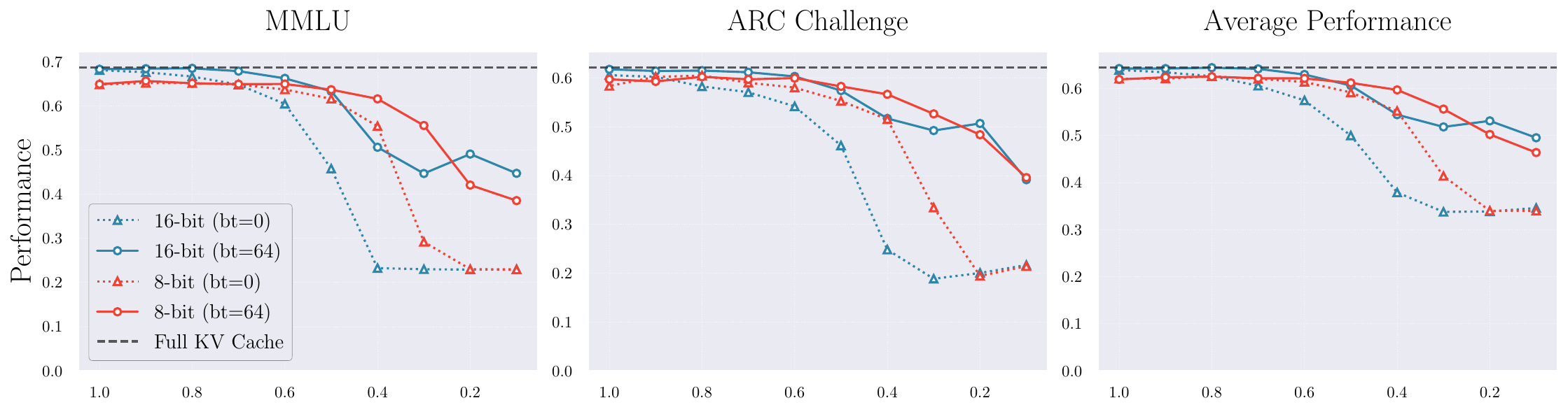}
    \includegraphics[width=\textwidth]{./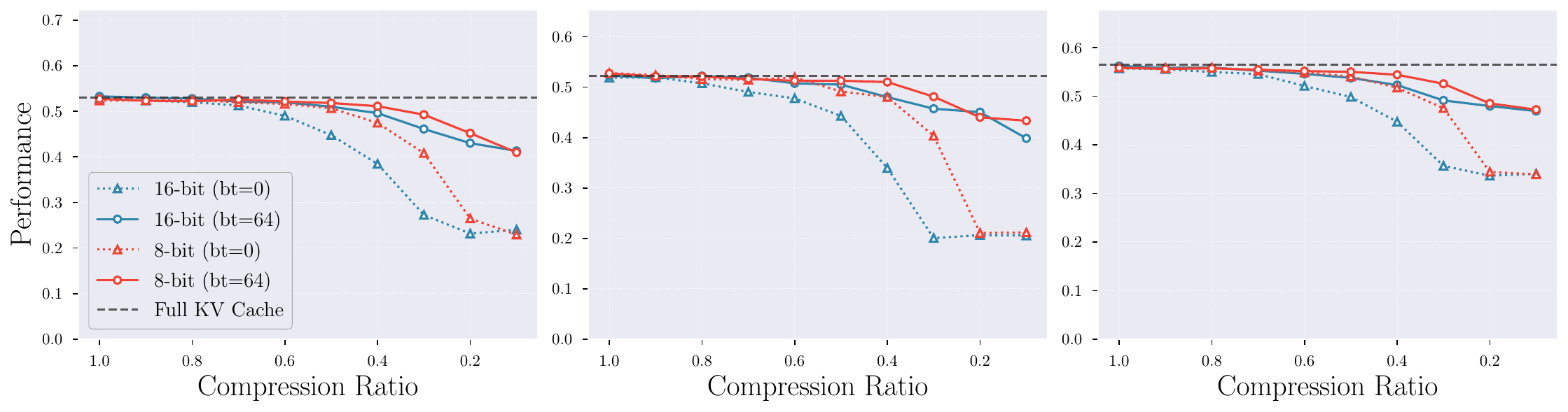}
    \caption{Performance on key NLP benchmarks for \texttt{Llama-3.1-8B-Instruct} (top) and \texttt{OLMoE-1B-7B-Instruct} (bottom). The buffered SWAN (`bt=64') maintains high performance even at significant compression ratios. Note the consistently smaller performance drop on the sparser OLMoE model, highlighting SWAN's ability to exploit the inherent sparsity in model architectures.}
    \label{fig:nlp_summary}
\end{figure*}

To demonstrate SWAN's generalizability, we evaluate it on a range of NLP benchmarks using two distinct architectures: the dense Grouped-Query Attention (GQA) of \texttt{Llama-3.1-8B-Instruct} \citep{grattafiori2024llama3herdmodels} and the inherently sparser Multi-Head Attention (MHA) of \texttt{OLMoE-1B-7B-Instruct} \citep{muennighoff2025olmoeopenmixtureofexpertslanguage}. This dual evaluation allows us to assess our method's performance on different attention structures.

The results, shown in Figure~\ref{fig:nlp_summary}, reveal three key findings. First, the 64-token dense buffer is critical for performance across all tasks. The buffered variants maintain high accuracy even with 50-60\% memory savings, while the zero-buffer versions degrade sharply. Second, for knowledge-intensive tasks like MMLU and ARC Challenge, the 8-bit variant is particularly effective under high compression. Similar to GSM8K task, its ability to retain more, albeit less precise, dimensions proves advantageous for factual recall.

Most importantly, our results reveal a powerful synergy between SWAN and a model's inherent architecture. The performance degradation on the sparser MHA-based OLMoE is consistently lower than on the GQA-based Llama across all tasks. This effect is most dramatically illustrated by the WikiText perplexity benchmark (see Appendix~\ref{appendix:nlp_details}), where the performance drop on OLMoE is three times less severe. This is a crucial finding: SWAN is not just imposing sparsity but is effectively leveraging the natural, learned structure of the model's attention mechanism.  It demonstrates that SWAN directly attacks and leverages the \textit{inherent sparsity} of a model's attention mechanism - a capability lacking in previous works. This explains its remarkable performance even without any reconstruction step and makes it an elegant solution for diverse model designs.

\subsection{Long-Context Evaluation}
\label{sec:long_context} 

\begin{figure*}[t!]
    \centering
    \includegraphics[width=\textwidth]{./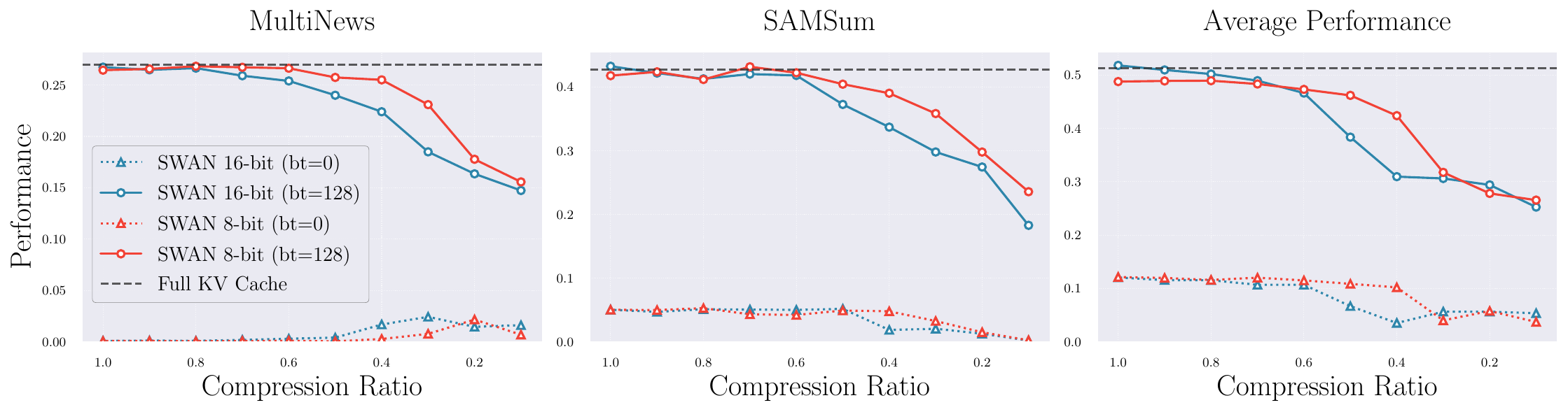}
    \caption{SWAN's performance on the LongBench suite. The figure highlights the performance across two summarization tasks, Multi-News and SAMSum, as well as the average performance across the MultiNews, LCC, SAMSum, Multi-News, and TREC tasks.The 128-token buffer (`bt=128') is critical, preventing the catastrophic failure seen in the zero-buffer versions. Note the graceful degradation of the buffered variants and the strong performance of the 8-bit version on summarization, even under aggressive compression.}
    \label{fig:longbench_summary}
\end{figure*}

Long-context tasks are the ultimate stress-test for a KV-cache compression method, as even minor, cumulative information loss can derail a model's understanding over thousands of tokens. We evaluate SWAN on the demanding LongBench suite \citep{bai2024longbenchbilingualmultitaskbenchmark} to demonstrate its capability in this challenging regime.

The results in Figure~\ref{fig:longbench_summary} are definitive: SWAN's hybrid cache design is essential for long-context performance. The zero-buffer variants (`bt=0') suffer a complete performance collapse across all tasks. This is because completely sparsifying the cache from the start prevents the model from forming a coherent understanding of the initial context. In contrast, the 128-token dense buffer acts as a high-fidelity `working memory', allowing the model to process local information accurately, while the growing sparse cache preserves the long-range dependencies of the document's `gist'.

With this crucial buffer in place, SWAN's robustness is remarkable. The average performance degrades gracefully, remaining highly competitive with the uncompressed baseline even with 50-60\% memory savings. On summarization tasks like Multi-News and SAMSum, the 8-bit buffered variant is particularly effective, often outperforming its 16-bit counterpart at high compression ratios. This again aligns with our previous findings that retaining a broader number of components, even at lower precision is more beneficial than keeping fewer, high-precision ones. This resilience is not limited to summarization, as our method shows similar strength on diverse tasks like code completion and passage retrieval (see Appendix~\ref{appendix:longbench_details}).

\section{Conclusion}
\label{sec:conclusion}

In this work, we introduced SWAN, a fine-tuning-free framework that fundamentally rethinks KV-cache compression by eliminating the costly, decompression step common to prior low-rank methods. Our approach performs attention directly on a pruned, sparse cache, simultaneously reducing both memory and compute load. Our extensive experiments demonstrated that SWAN, augmented with a small dense buffer, maintains robust performance even at 50-60\% memory savings, and uniquely benefits from sparser attention architectures. Its key advantage lies in its runtime-tunable design, which offers a level of operational flexibility absent in methods with fixed offline configurations. As a practical solution for efficient, long-context inference, SWAN's full potential will be further realized with advancements in hardware and software optimized for sparse computation.

\newpage

\bibliography{main}
\bibliographystyle{tmlr}

\newpage

\appendix

\section{Appendix}

\subsection{Theoretical Justifications for Lossless Rotation}
\label{appendix:proofs_lossless_rotation}

This appendix provides the formal proofs for the claim that our projection-based rotation of the attention components is lossless prior to the pruning step. We demonstrate that both the attention scores and the final output of the attention mechanism remain unchanged.

\begin{lemma}[Rotational Invariance of Attention Scores]
Let $P_{QK} \in \mathbb{R}^{d_h \times d_h}$ be an orthogonal projection matrix derived from offline calibration. Let the projected query and key vectors be $\hat{q}_{i+1} = q_{i+1}P_{QK}$ and $\widehat{K}_{cache} = K_{cache}P_{QK}$. The attention scores computed using the original vectors ($S$) are identical to those computed using the projected vectors ($\hat{S}$).
\end{lemma}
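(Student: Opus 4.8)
The plan is to exploit the defining property of an orthogonal matrix, namely $P_{QK} P_{QK}^\top = I_{d_h}$, and the fact that the attention score between a query row and a key row is exactly an inner product of $d_h$-vectors. First I would write out a single entry of the score matrix before rotation: for query token $i+1$ and any cached key token $m$, the (unscaled) score is $S_{m} = q_{i+1} k_m^\top$, where $k_m$ is the $m$-th row of $K_{cache}$. Next I would substitute the projected vectors $\hat{q}_{i+1} = q_{i+1} P_{QK}$ and $\hat{k}_m = k_m P_{QK}$ and compute $\hat{S}_{m} = \hat{q}_{i+1}\hat{k}_m^\top = (q_{i+1}P_{QK})(k_m P_{QK})^\top = q_{i+1} P_{QK} P_{QK}^\top k_m^\top$.

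The key step is then to invoke orthogonality of $P_{QK}$: since the columns of $P_{QK}$ (the right singular vectors returned by SVD) form an orthonormal basis of $\mathbb{R}^{d_h}$, we have $P_{QK} P_{QK}^\top = I_{d_h}$, so the middle collapses and $\hat{S}_{m} = q_{i+1} k_m^\top = S_m$. Stacking this over all cached positions $m$ gives $\hat S = S$ as full vectors/matrices, and since the softmax and the $1/\sqrt{d_h}$ scaling are applied identically to $S$ and $\hat S$, the resulting attention weights coincide as well. I would phrase the matrix-level version directly: $\hat S = \hat q_{i+1}\widehat K_{cache}^\top = q_{i+1}P_{QK}(K_{cache}P_{QK})^\top = q_{i+1}P_{QK}P_{QK}^\top K_{cache}^\top = q_{i+1}K_{cache}^\top = S$.

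The one subtlety worth flagging — and the only place the argument could go wrong — is the claim $P_{QK}P_{QK}^\top = I$. The matrix $P_{QK}$ is the full right-singular-vector matrix $V$ of the joint matrix $\mathbf{S}_{QK}^{(l,j)} \in \mathbb{R}^{((n\cdot G + m)\times d_h)}$ (or its analogue), which is a square $d_h \times d_h$ matrix with orthonormal columns, hence genuinely orthogonal; if one instead truncated $V$ to its top-$r$ columns the product would only be a projector, not the identity, and the lemma would fail. So the proof should explicitly note that we keep the \emph{full} $V$ at the rotation stage (pruning happens later, at the level of individual cached vectors, and is handled separately as the sole source of error). With that observation in place the remaining manipulations are a one-line associativity-of-matrix-multiplication computation, so I do not expect any real obstacle; the proof is essentially immediate once the orthogonality is stated cleanly.
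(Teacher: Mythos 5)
Your proof is correct and follows exactly the same route as the paper's: substitute the definitions, use $(AB)^T = B^T A^T$ and associativity, and collapse $P_{QK}P_{QK}^T$ to the identity via orthogonality. Your added remark that $P_{QK}$ must be the \emph{full} square right-singular-vector matrix (not a truncated one) is a worthwhile clarification the paper leaves implicit, but it does not change the argument.
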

\begin{proof}
The original attention scores are given by the dot product $S = q_{i+1}K_{cache}^T$. The attention scores computed with the projected vectors are $\hat{S} = \hat{q}_{i+1}\widehat{K}_{cache}^T$.

We can show that $\hat{S}$ is equivalent to $S$:
\begin{align*}
\hat{S} &= (\hat{q}_{i+1})(\widehat{K}_{cache})^T \\
&= (q_{i+1}P_{QK})(K_{cache}P_{QK})^T && \text{[Substituting definitions]} \\
&= (q_{i+1}P_{QK})(P_{QK}^T K_{cache}^T) && \text{[Using the transpose property $(AB)^T = B^T A^T$]} \\
&= q_{i+1}(P_{QK}P_{QK}^T)K_{cache}^T && \text{[Associativity of matrix multiplication]} \\
&= q_{i+1} I K_{cache}^T && \text{[Since $P_{QK}$ is orthogonal, $P_{QK}P_{QK}^T = I$]} \\
&= q_{i+1}K_{cache}^T = S
\end{align*}
Thus, the scores are identical ($\hat{S} = S$). This proves that the projection of queries and keys is a lossless rotation of the coordinate space that preserves their dot product relationships.
\end{proof}

\begin{lemma}[Losslessness of Full Attention with Absorbed Weights]
Let the rotated attention output for a head $j$ be $\hat{o}_j = \text{softmax}(S_j/\sqrt{d_h}) \hat{V}_j$, where $\hat{V}_j = V_j P_{VO}^{(j)}$. Let the modified output weight be $\widehat{W}_O^{(j)} = (P_{VO}^{(j)})^T W_O^{(j)}$. The final contribution to the MHA output from this head using rotated components is identical to the original.
\end{lemma}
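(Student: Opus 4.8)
The plan is to expand the original head contribution to the MHA output and show it equals the contribution computed from the rotated value vectors and the absorbed output weights. The original head output, before the final projection, is $o_j = \text{softmax}(S_j/\sqrt{d_h}) V_j$, and its contribution to the concatenated-then-projected MHA output is $o_j W_O^{(j)}$. On the rotated side, the contribution is $\hat{o}_j \widehat{W}_O^{(j)}$ with $\hat{o}_j = \text{softmax}(S_j/\sqrt{d_h}) \hat{V}_j$ and $\hat{V}_j = V_j P_{VO}^{(j)}$, $\widehat{W}_O^{(j)} = (P_{VO}^{(j)})^T W_O^{(j)}$.

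The key steps, in order: (i) substitute the definitions of $\hat{o}_j$ and $\widehat{W}_O^{(j)}$ into $\hat{o}_j \widehat{W}_O^{(j)}$; (ii) use associativity to regroup the softmax-weighted sum so that the factor $P_{VO}^{(j)} (P_{VO}^{(j)})^T$ appears adjacent; (iii) invoke orthogonality of $P_{VO}^{(j)}$, so that $P_{VO}^{(j)} (P_{VO}^{(j)})^T = I$, collapsing the expression to $\text{softmax}(S_j/\sqrt{d_h}) V_j W_O^{(j)} = o_j W_O^{(j)}$; (iv) note that the preceding Lemma already guarantees $S_j$ is the same whether computed from original or rotated queries and keys, so the softmax weights are unchanged; (v) conclude that, since each head's contribution is preserved and the MHA output is just the sum of these contributions, the full MHA output is identical. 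A short remark can observe that the row-wise action of the softmax-weighted average commutes with right-multiplication by a fixed matrix, which is what licenses step (ii).

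I do not expect a genuine obstacle here — this is essentially the same orthogonality cancellation as in the attention-scores lemma, applied on the value/output side rather than the query/key side. The one point requiring a little care is bookkeeping around the GQA grouping: $P_{VO}$ is defined per KV-head, while $W_O$ is sliced per query-head, so I would either work with a single head $j$ as the statement does (treating $P_{VO}^{(j)}$ as the projection for that head, possibly the ``expanded'' copy within its group) or explicitly note that $P_{VO, expanded}$ repeats the KV-head projection across the $G$ query heads in the group and that the cancellation $P_{VO, expanded}^T$ against $P_{VO, expanded}$ still holds head-by-head. With that convention fixed, the algebra is a three-line chain of equalities mirroring the previous proof.
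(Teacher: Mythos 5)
Your proposal is correct and matches the paper's proof step for step: substitute the definitions, regroup by associativity so that $P_{VO}^{(j)}(P_{VO}^{(j)})^T = I$ cancels, and invoke the previous lemma to ensure the softmax weights are unchanged. The extra remark about the GQA expansion of $P_{VO}$ across query heads is a sensible clarification but does not change the argument.
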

\begin{proof}
The contribution of head $j$ to the final MHA output in the original model is $o_j W_O^{(j)}$. In our modified model, this contribution is $\hat{o}_j \widehat{W}_O^{(j)}$.

From Lemma 1, we know that the attention scores are unchanged by the $P_{QK}$ projection, so the softmax output is also unchanged. Let $A_j = \text{softmax}(S_j/\sqrt{d_h})$.

The original head output is $o_j = A_j V_j$. The rotated head output is $\hat{o}_j = A_j \hat{V}_j = A_j (V_j P_{VO}^{(j)})$.

Now, we can show the equivalence of the final output contribution:
\begin{align*}
\hat{o}_j \widehat{W}_O^{(j)} &= (A_j (V_j P_{VO}^{(j)})) (\widehat{W}_O^{(j)}) && \text{[Substituting definition of $\hat{o}_j$]} \\
&= (A_j V_j P_{VO}^{(j)}) ((P_{VO}^{(j)})^T W_O^{(j)}) && \text{[Substituting definition of $\widehat{W}_O^{(j)}$]} \\
&= A_j V_j (P_{VO}^{(j)}(P_{VO}^{(j)})^T) W_O^{(j)} && \text{[Associativity of matrix multiplication]} \\
&= A_j V_j I W_O^{(j)} && \text{[Since $P_{VO}$ is orthogonal, $PP^T = I$]} \\
&= (A_j V_j) W_O^{(j)} \\
&= o_j W_O^{(j)}
\end{align*}
This proves that applying the absorbed weights $\widehat{W}_V$ (to generate $\hat{V}$) and $\widehat{W}_O$ yields the exact same output as the original attention mechanism. Therefore, the only source of approximation error in our method comes from the subsequent pruning of dimensions.
\end{proof}

\subsection{Derivation of Computational Complexity}
\label{app:complexity_derivation}
This appendix provides the detailed mathematical derivation for the computational complexity analysis presented in Section~\ref{sec:complexity_analysis_main}.

\begin{proposition}[Complexity of Standard Attention]
The computational cost of the full attention calculation in standard auto-regressive attention for a single head at a sequence length $L$ is $C_{std} \approx 4 L \cdot d_{h}$ FLOPs.
\end{proposition}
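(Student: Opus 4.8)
The plan is to count FLOPs in the standard autoregressive decoding step for a single attention head and show the total is approximately $4Ld_h$. The decoding step at sequence length $L$ has a cache $K_{cache}, V_{cache} \in \mathbb{R}^{L \times d_h}$ and a single new query vector $q \in \mathbb{R}^{1 \times d_h}$. There are exactly two matrix-vector products that scale with $L$: the score computation $S = q K_{cache}^T$, producing a vector in $\mathbb{R}^{1 \times L}$, and the value aggregation $o = \text{softmax}(S/\sqrt{d_h}) V_{cache}$, producing a vector in $\mathbb{R}^{1 \times d_h}$. I would treat the softmax itself, the scaling by $1/\sqrt{d_h}$, and any bias/RoPE bookkeeping as $O(L)$ or $O(d_h)$ lower-order terms that do not affect the leading-order count.

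First I would account for the score computation: each of the $L$ entries of $S$ is an inner product of two $d_h$-dimensional vectors, costing $d_h$ multiplications and $d_h - 1$ additions, i.e. approximately $2d_h$ FLOPs per entry, for a total of about $2L d_h$ FLOPs. Next I would account for the value aggregation: the output $o$ is a weighted sum $\sum_{t=1}^{L} a_t (V_{cache})_t$ where $a \in \mathbb{R}^L$ are the softmax weights; each of the $d_h$ output coordinates is a dot product of an $L$-dimensional weight vector with an $L$-dimensional column of $V_{cache}$, again costing about $2L$ FLOPs per coordinate, for a total of about $2L d_h$ FLOPs. Summing the two dominant contributions gives $C_{std} \approx 2L d_h + 2L d_h = 4L d_h$ FLOPs, which is the claimed bound.

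The only mildly delicate point — and the place where a reader might object — is the bookkeeping convention for what counts as a FLOP (whether a fused multiply-add is one operation or two, and whether to keep the $-1$ in the additions). Since the statement is asymptotic ($\approx$), I would simply fix the convention that a multiply and an add each count as one FLOP, note that the softmax and scaling contribute only $O(L + d_h)$ lower-order terms, and observe that the two $2Ld_h$ terms dominate. I do not anticipate a genuine obstacle here; the main care is just to be explicit that the projection of the single new query/key by $W_Q, W_K$ is an $O(d_h^2)$ cost independent of $L$ and is therefore also absorbed into the non-dominant terms for the purposes of this particular proposition (it will reappear explicitly in the SWAN side of the comparison that leads to Equation~\ref{eq:break_even}).
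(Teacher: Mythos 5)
Your proof is correct and follows essentially the same route as the paper: it counts the two matrix--vector products ($qK_{cache}^T$ and the weighted sum against $V_{cache}$) at $2Ld_h$ FLOPs each, dismisses the softmax and scaling as lower-order, and sums to $4Ld_h$. The extra remarks on FLOP-counting conventions and the $O(d_h^2)$ query/key projections are harmless elaborations beyond what the paper states.
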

\begin{proof}
The total cost consists of two main matrix-vector products and the softmax operation.
\begin{enumerate}
    \item \textbf{Score Calculation:} The product $qK_{cache}^T$ between the query ($1 \times d_h$) and the key cache ($L \times d_h$) costs approximately $2 L \cdot d_h$ FLOPs.
    \item \textbf{Softmax:} The softmax operation has a cost of $O(L)$, which is a lower-order term.
    \item \textbf{Output Calculation:} The product of the attention weights ($1 \times L$) and the value cache ($L \times d_h$) costs another $2 L \cdot d_h$ FLOPs.
\end{enumerate}
Summing these, the total complexity is dominated by the two matrix-vector products, giving $C_{std} \approx 4 L \cdot d_{h}$ FLOPs.
\end{proof}

\begin{proposition}[Complexity of SWAN]
The computational cost of the full attention calculation in SWAN for a single head at a sequence length $L$ is $C_{SWAN} \approx 4d_h^2 + 4(L-b)k_{active} + 4bd_h$.
\end{proposition}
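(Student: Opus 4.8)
The plan is to account separately for the three distinct computational stages of a SWAN decoding step and sum their FLOP costs, mirroring the structure of the proof of the standard-attention proposition. First I would isolate the \textbf{runtime projection cost}: at each step both $q_{i+1}$ and $k_{i+1}$ must be multiplied by $P_{QK} \in \mathbb{R}^{d_h \times d_h}$, and each such matrix-vector product of a $1 \times d_h$ vector with a $d_h \times d_h$ matrix costs approximately $2 d_h^2$ FLOPs (counting one multiply and one add per entry of the $d_h \times d_h$ matrix). Two such projections give $4 d_h^2$ FLOPs, a fixed per-step cost independent of $L$. (The absorbed $\widehat{W}_V$ and $\widehat{W}_O$ contribute no extra cost, since they replace the original weight matrices of the same shape.)

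Next I would handle the \textbf{attention score and output computation on the hybrid cache}, which splits into a sparse part of size $L - b$ and a dense buffer of size $b$. For the dense buffer, the score product $\hat{q}_{i+1} K_{buffer}^T$ and the output product with $V_{buffer}$ each cost $2 b d_h$, totalling $4 b d_h$ FLOPs — exactly the standard-attention cost restricted to $b$ tokens. For the sparse part, each of the $L - b$ historical key and value vectors has only $k_{active}$ nonzero entries, so the sparse-dense matrix-vector product $\hat{q}_{i+1} K_{sparse}^T$ costs about $2 (L - b) k_{active}$ FLOPs (one multiply-add per nonzero), and likewise the output product with $V_{sparse}$ costs $2 (L - b) k_{active}$, for a combined $4 (L - b) k_{active}$. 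The softmax over all $L$ logits contributes only $O(L)$, a lower-order term, as in the standard case; the per-step pruning (an $\arg$ top-$k$ over $d_h$ entries) is likewise $O(d_h)$ and negligible.

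Summing the three contributions — $4 d_h^2$ for the projections, $4(L-b)k_{active}$ for the sparse cache, and $4 b d_h$ for the dense buffer — yields $C_{SWAN} \approx 4 d_h^2 + 4(L-b)k_{active} + 4 b d_h$, as claimed. I would close by noting that solving $C_{SWAN} < C_{std}$, i.e. $4 d_h^2 + 4(L-b)k_{active} + 4bd_h < 4 L d_h$, rearranges directly to the break-even condition $L > d_h^2/(d_h - k_{active}) + b$ of Equation~\ref{eq:break_even}, tying the two propositions together.

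The main obstacle is not any deep argument but getting the \emph{constant factors and the accounting of the sparse product} right: one must be careful that a sparse-dense inner product against a $k_{active}$-nonzero vector genuinely costs $\approx 2 k_{active}$ FLOPs rather than $2 d_h$ (which requires the dense query to be indexed by the stored sparse indices, consistent with Algorithm~\ref{algo:swan_attention_algo}), and that the $L$ in the standard-attention baseline is replaced consistently by $L - b$ sparse tokens plus $b$ dense tokens with no double-counting. I would also state explicitly the modelling assumption that gather/scatter overhead for the CSR-format sparse vectors is treated as a lower-order term, so that the FLOP count is the dominant cost — this is the one place where the "$\approx$" is doing real work.
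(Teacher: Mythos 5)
Your proposal is correct and follows essentially the same decomposition as the paper's proof: a fixed $4d_h^2$ projection overhead, $2(L-b)k_{active}$ each for the sparse score and output products, $2bd_h$ each for the dense buffer products, and softmax treated as a lower-order term. The additional remarks on the absorbed weights, the top-$k$ cost, and the sparse-indexing assumption are sensible refinements but do not change the argument.
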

\begin{proof}
The total cost is the sum of the runtime projections and the two hybrid matrix-vector products for scores and outputs, correctly accounting for sparsity in both the Key and Value caches.
\begin{enumerate}
    \item \textbf{Runtime Projections:} Projecting the current query and key with $P_{QK}$ is a fixed overhead costing $2 \times (2d_h^2) = 4d_h^2$ FLOPs.
    \item \textbf{Score Calculation:} The computation $S = \hat{q}_{i+1}K_{cache}^T$ involves a sparse-dense product (with the sparse part of the key cache) costing $2(L-b)k_{active}$ FLOPs and a dense-dense product (with the key buffer) costing $2bd_h$ FLOPs.
    \item \textbf{Softmax:} The cost is again a lower-order term of $O(L)$.
    \item \textbf{Output Calculation:} The final multiplication of the dense attention scores ($1 \times L$) with the hybrid value cache $V_{cache}$ has the same structure and cost as the score calculation. It involves a sparse-dense product with the sparse value cache costing $2(L-b)k_v$ (where $k_v = k_{active}$) FLOPs and a dense-dense product with the value buffer costing $2bd_h$ FLOPs.
\end{enumerate}
Summing these costs gives the total complexity:
\begin{align*}
C_{SWAN} &\approx 4d_h^2 + [2(L-b)k_{active} + 2bd_h] + [2(L-b)k_{active} + 2bd_h] \\
&\approx 4d_h^2 + 4(L-b)k_{active} + 4bd_h
\end{align*}
\end{proof}

\begin{proposition}[Computational Break-Even Point]
SWAN is computationally more efficient than standard attention when the sequence length $L$ satisfies: $L > \frac{d_h^2}{d_h - k_{active}} + b$.
\end{proposition}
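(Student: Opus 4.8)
The plan is to directly compare the two FLOP counts established in the preceding two propositions and solve the resulting inequality for $L$. From Proposition on standard attention we have $C_{std} \approx 4L d_h$, and from the Proposition on SWAN we have $C_{SWAN} \approx 4 d_h^2 + 4(L-b)k_{active} + 4b d_h$. The break-even condition is simply $C_{SWAN} < C_{std}$, so the entire argument reduces to rearranging
\begin{equation}
4 d_h^2 + 4(L-b)k_{active} + 4b d_h < 4L d_h. \nonumber
\end{equation}

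First I would divide through by $4$ to clear the common factor, giving $d_h^2 + (L-b)k_{active} + b d_h < L d_h$. Next I would collect the terms containing $L$ on one side: moving $(L-b)k_{active}$ to the right and everything else around yields $d_h^2 + b d_h - b k_{active} < L d_h - L k_{active} = L(d_h - k_{active})$. Then I would observe that $b d_h - b k_{active} = b(d_h - k_{active})$, so the left side becomes $d_h^2 + b(d_h - k_{active})$. Dividing both sides by $d_h - k_{active}$ — which is strictly positive whenever the pruning is nontrivial, i.e. $k_{active} < d_h$ — gives $\frac{d_h^2}{d_h - k_{active}} + b < L$, which is exactly the claimed inequality.

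The only subtlety worth flagging is the sign of $d_h - k_{active}$: dividing an inequality by this quantity preserves its direction only when $d_h - k_{active} > 0$, so I would state explicitly that we assume $k_{active} < d_h$ (genuine pruning), which is the regime of interest throughout the paper; if $k_{active} = d_h$ no compression occurs and the break-even point is vacuous, while $k_{active} > d_h$ is impossible. I would also note that the $O(L)$ softmax terms and any lower-order additive constants from the CSR offset bookkeeping have been absorbed into the $\approx$ in both cost expressions and do not affect the leading-order threshold. There is no real obstacle here — the result is an elementary algebraic rearrangement of the two approximate FLOP counts already derived, and the main thing to get right is simply tracking the factoring of $b(d_h - k_{active})$ and the positivity assumption on the denominator.
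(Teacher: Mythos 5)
Your proposal is correct and follows exactly the same route as the paper's own derivation: set $C_{SWAN} < C_{std}$, divide by $4$, factor out $b(d_h - k_{active})$ and $L(d_h - k_{active})$, and divide by the positive quantity $d_h - k_{active}$ under the assumption $k_{active} < d_h$. Your explicit remarks on the sign of the denominator and the absorbed lower-order terms are sound but do not change the argument.
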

\begin{proof}
We find the condition for which $C_{SWAN} < C_{std}$, ignoring lower-order terms:
\begin{align*}
4d_h^2 + 4(L-b)k_{active} + 4bd_h &< 4Ld_h \\
d_h^2 + (L-b)k_{active} + bd_h &< Ld_h \\
d_h^2 + Lk_{active} - bk_{active} + bd_h &< Ld_h \\
d_h^2 + b(d_h - k_{active}) &< Ld_h - Lk_{active} \\
d_h^2 + b(d_h - k_{active}) &< L(d_h - k_{active})
\end{align*}
Assuming $k_{active} < d_h$ (the practical use case), we can divide by the positive term $(d_h - k_{active})$:
\begin{align*}
\frac{d_h^2}{d_h - k_{active}} + b &< L
\end{align*}
This concludes the derivation.
\end{proof}

\subsubsection{Numerical Examples of the Break-Even Point}
To provide a more concrete understanding of the break-even formula, we analyze it under different scenarios. We assume a typical head dimension of $d_h = 128$, which makes the fixed overhead term $d_h^2 = 16,384$.

\textbf{Case 1: No Buffer ($b=0$)}
In this scenario, every token past the first is immediately pruned and stored sparsely.
\begin{itemize}
    \item \textbf{Aggressive Pruning (75\% pruned, $k_{active}=32$):} The per-token saving is proportional to $128 - 32 = 96$. The break-even point is $L > \frac{16,384}{96} + 0 \approx 171$ tokens.
    \item \textbf{Moderate Pruning (50\% pruned, $k_{active}=64$):} The per-token saving is proportional to $128 - 64 = 64$. The break-even point is $L > \frac{16,384}{64} + 0 = 256$ tokens.
    \item \textbf{Light Pruning (25\% pruned, $k_{active}=96$):} The per-token saving is proportional to $128 - 96 = 32$. The break-even point is $L > \frac{16,384}{32} + 0 = 512$ tokens.
\end{itemize}

\textbf{Case 2: With Buffer ($b=128$)}
Here, we maintain a dense buffer of the 128 most recent tokens, a common configuration in our experiments. The buffer size is simply added to the break-even point calculated from the pruning ratio.
\begin{itemize}
    \item \textbf{Aggressive Pruning (75\% pruned, $k_{active}=32$):} The break-even point is $L > 171 + 128 = 299$ tokens.
    \item \textbf{Moderate Pruning (50\% pruned, $k_{active}=64$):} The break-even point is $L > 256 + 128 = 384$ tokens.
    \item \textbf{Light Pruning (25\% pruned, $k_{active}=96$):} The break-even point is $L > 512 + 128 = 640$ tokens.
\end{itemize}
These examples clearly illustrate the trade-off: more aggressive pruning leads to substantial per-token savings, allowing the system to overcome the fixed projection overhead much earlier in the sequence. The dense buffer adds a constant offset to this point, representing the initial context length during which all attention computations remain dense.

\begin{table}[h!]
\centering
\caption{Performance of \texttt{Llama-3.1-8B-Instruct} as a function of the top-k retention ratio ($k_{active}/d_h$). A ratio of 1.0 (B) is the uncompressed baseline. The best performance for each task is highlighted in bold. Acronyms: HS (HellaSwag), WN (Winogrande), TQA (TruthfulQA), ARC-C (ARC Challenge), WT (WikiText). Arrows indicate whether a higher ($\uparrow$) or lower ($\downarrow$) score is better.}
\label{tab:perf_vs_k}
\begin{tabular}{c|cccccc|c|c}
\toprule
\textbf{Ratio} & \textbf{MMLU} $\uparrow$ & \textbf{GSM8K} $\uparrow$ & \textbf{HS} $\uparrow$ & \textbf{WN} $\uparrow$ & \textbf{TQA} $\uparrow$ & \textbf{ARC-C} $\uparrow$ & \textbf{WT} $\downarrow$ & \textbf{Avg Perf.} $\uparrow$ \\
\midrule
1.0 (B) & \textbf{0.687} & \textbf{0.804} & \textbf{0.608} & \textbf{0.762} & \textbf{0.551} & \textbf{0.621} & \textbf{8.912} & \textbf{0.671} \\
0.9 & 0.687 & 0.781 & 0.607 & 0.757 & 0.550 & 0.617 & 8.913 & 0.666 \\
0.75 & 0.683 & 0.792 & 0.606 & 0.762 & 0.551 & 0.611 & 8.943 & 0.668 \\
0.5 & 0.659 & 0.642 & 0.596 & 0.735 & 0.536 & 0.583 & 9.462 & 0.625 \\
0.3 & 0.310 & 0.038 & 0.456 & 0.545 & 0.491 & 0.391 & 21.097 & 0.372 \\
\bottomrule
\end{tabular}
\end{table}

\subsection{Benchmark Details}
\label{appendix:benchmarks}
This section provides details on the evaluation benchmarks used throughout our experiments, including the number of few-shot examples and the primary evaluation metric for each task.

\subsubsection*{Multi-Task Benchmarks}
\begin{itemize}
    \item \textbf{MMLU} (Massive Multitask Language Understanding) \citep{hendrycks2021measuringmassivemultitasklanguage}: A diverse benchmark testing world knowledge and problem-solving skills across 57 subjects. Evaluated with 5-shot prompting and measured by accuracy.
    \item \textbf{HellaSwag} \citep{zellers2019hellaswagmachinereallyfinish}: A commonsense reasoning task that involves choosing the most plausible continuation of a sentence. Evaluated with 10-shot prompting and measured by accuracy.
    \item \textbf{Winogrande} \citep{sakaguchi2019winograndeadversarialwinogradschema}: A benchmark designed to test commonsense reasoning through pronoun resolution problems. Evaluated with 5-shot prompting and measured by accuracy.
    \item \textbf{TruthfulQA (MC2)} \citep{lin2022truthfulqameasuringmodelsmimic}: A task that measures a model's tendency to answer questions truthfully, even when a common misconception provides a tempting alternative. Evaluated with 6-shot prompting and measured by accuracy.
    \item \textbf{ARC Challenge} \citep{clark2018thinksolvedquestionanswering}: A question-answering dataset composed of challenging science questions from grade-school to high-school level. Evaluated with 25-shot prompting and measured by accuracy.
    \item \textbf{WikiText} \citep{merity2016pointersentinelmixturemodels}: A benchmark measuring language modeling quality through zero-shot evaluation, measured by word-level perplexity, where lower is better.
\end{itemize}

\subsubsection*{Reasoning Benchmark}
\begin{itemize}
    \item \textbf{GSM8K} (Grade School Math 8K) \citep{cobbe2021trainingverifierssolvemath}: A dataset of grade-school math word problems that require multi-step reasoning. Evaluated with 5-shot prompting and measured by the flexible extract version of exact match of the final answer.
\end{itemize}

\subsubsection*{Long-Context Benchmarks (LongBench v1 \text{\citep{bai2024longbenchbilingualmultitaskbenchmark}})}
\begin{itemize}
    \item \textbf{Multi-News}: A long-document summarization task. Performance is measured by ROUGE score \citep{lin-2004-rouge}.
    \item \textbf{TREC}: A fine-grained question classification task on long documents. Performance is measured by classification score.
    \item \textbf{LCC} (Code Completion): A task evaluating a model's ability to complete long code snippets. Performance is measured by code similarity score.
    \item \textbf{Passage Retrieval (en)}: A task that requires retrieving relevant passages from a long document. Performance is measured by retrieval score.
    \item \textbf{SAMsum}: A dialogue summarization task. Performance is measured by ROUGE score.
\end{itemize}

\subsection{Performance vs. Pruning Ratio Analysis}
\label{app:perf_vs_pruning}
To establish the trade-off between KV-cache compression and model performance, we evaluate the \texttt{Llama-3.1-8B-Instruct} \citep{grattafiori2024llama3herdmodels} model's performance degradation as we vary the percentage of dimensions retained ($k_{active}$) in the sparse cache. The results, shown in Table~\ref{tab:perf_vs_k}, benchmark our method across several common-sense reasoning, knowledge, and language modeling tasks. A retention ratio of 1.0 represents the uncompressed baseline performance.

The results demonstrate that performance remains remarkably stable with moderate pruning. Retaining 75\% of the dimensions (`Ratio=0.75') results in a negligible drop in average performance, staying within 1\% of the baseline. Even when pruning half of the dimensions (`Ratio=0.5'), the model retains strong performance across most tasks, with an average degradation of less than 5\%. 

However, performance begins to degrade more significantly with more aggressive pruning. At a retention ratio of 0.3 (a 70\% reduction in dimensions), we observe a sharp decline across all benchmarks. Notably, the GSM8K task, which evaluates mathematical reasoning, is the most sensitive to information loss. Its performance collapses almost completely at the 0.3 ratio, dropping from over 80\% to just 3.8\% accuracy. This high sensitivity makes GSM8K an excellent benchmark for stress-testing our method. Consequently, our analysis in the main paper begins on this task, as its performance can be considered a practical lower bound for our approach's capabilities.

\subsection{Ablation Study on Key and Value Pruning Ratios}
\label{appendix:ablation}
To understand the relative importance of the key and value vectors in preserving model performance, we conducted an ablation study where we varied the proportion of dimensions retained for each. The study was performed using the \texttt{meta-llama/Llama-3.1-8B-Instruct} model with a dense buffer size of zero ($b=0$) to isolate the effect of the sparse cache. We evaluated the model on a suite of standard benchmarks, each with a specific few-shot setting to ensure fair comparison.

The results of this study are presented in Table~\ref{tab:ablation_kv}. We vary the retention ratio for keys ($TopK_R$) and values ($TopV_R$) such that their sum is always 1.0, representing a fixed information budget distributed between them.

\begin{table}[t!]
\centering
\caption{Ablation study on the \texttt{meta-llama/Llama-3.1-8B-Instruct} model with a zero-token buffer ($b=0$), varying the retention ratios for Key ($TopK_R$) and Value ($TopV_R$) vectors. The best performance for each task is highlighted in bold. Acronyms: MMLU (Massive Multitask Language Understanding), HS (HellaSwag), WN (Winogrande), TQA (TruthfulQA MC2), ARC-C (ARC Challenge), and WT (WikiText). Arrows indicate whether a higher ($\uparrow$) or lower ($\downarrow$) score is better.}
\label{tab:ablation_kv}
\begin{tabular}{cc|ccccc|c}
\toprule
\textbf{$TopK_R$} & \textbf{$TopV_R$} & \textbf{MMLU} $\uparrow$ & \textbf{HS} $\uparrow$ & \textbf{WN} $\uparrow$ & \textbf{TQA} $\uparrow$ & \textbf{ARC-C} $\uparrow$ & \textbf{WT} $\downarrow$ \\
\midrule
0.1 & 0.9 & 0.23 & 0.26 & 0.50 & 0.48 & 0.21 & 1431.04 \\
0.2 & 0.8 & 0.23 & 0.30 & 0.53 & 0.47 & 0.24 & 47.09 \\
0.3 & 0.7 & 0.50 & 0.55 & 0.59 & 0.47 & 0.49 & 13.01 \\
0.4 & 0.6 & 0.63 & 0.59 & 0.68 & 0.52 & 0.57 & 10.02 \\
0.5 & 0.5 & \textbf{0.66} & \textbf{0.60} & \textbf{0.73} & \textbf{0.54} & \textbf{0.58} & \textbf{9.46} \\
0.6 & 0.4 & \textbf{0.66} & 0.59 & \textbf{0.73} & 0.52 & \textbf{0.58} & 9.52 \\
0.7 & 0.3 & 0.64 & 0.58 & 0.72 & 0.50 & 0.56 & 10.11 \\
0.8 & 0.2 & 0.57 & 0.52 & 0.66 & 0.50 & 0.49 & 12.48 \\
0.9 & 0.1 & 0.24 & 0.32 & 0.52 & 0.48 & 0.26 & 63.30 \\
\bottomrule
\end{tabular}
\end{table}

The results clearly indicate that both key and value vectors are critical for retaining model performance. Extreme pruning of either component leads to a dramatic drop in accuracy across all tasks. For instance, retaining only 10\% of key dimensions ($TopK_R=0.1$) while keeping 90\% of value dimensions results in a catastrophic increase in WikiText perplexity and poor performance on all other benchmarks.

Interestingly, the optimal balance appears to be near the center. The configuration with symmetric pruning ($TopK_R=0.5, TopV_R=0.5$) achieves the best or near-best results on every single task, suggesting that, as a general rule, keys and values are of roughly equal importance. The slightly asymmetric configuration of ($TopK_R=0.6, TopV_R=0.4$) also performs exceptionally well, matching the top performance on MMLU, Winogrande, and ARC-C. This might suggest that for some tasks, retaining slightly more information in the key vectors, which are responsible for the attention score distribution, is marginally more beneficial than retaining it in the value vectors, which carry the content to be aggregated. However, the overall trend strongly supports a balanced pruning strategy as a robust and effective baseline for our method.

\subsection{Ablation Study: Importance of Projection Matrix Specificity}
While any orthogonal projection is mathematically lossless before pruning, the performance of our method hinges on the quality of the rotation itself. A well-chosen rotation aligns the most important information with the top dimensions, minimizing information loss during the subsequent pruning step. To validate that our data-driven method for computing projection matrices is critical for performance, we conduct a series of ablation studies. We compare our proposed projection method against several variants, all evaluated with a 50\% retention ratio ($k_{active}/d_h = 0.5$).

The ablation experiments are designed as follows:
\begin{itemize}
    \item \textbf{Random Projection:} We replace our learned matrices with an orthogonal matrix derived from 4096 randomly generated vectors following a Gaussian distribution. This tests whether a generic, non-data-driven orthogonal basis is sufficient.
    \item \textbf{Layer-Shuffle:} We randomly shuffle our pre-computed projection matrices across the model's layers. This experiment is designed to determine if the learned subspaces are specific to each layer or if a more generic, layer-agnostic projection would suffice.
    \item \textbf{KV-Shuffle:} We interchange the projection matrices for the Key-Value and Query-Output subspaces ($P_{QK} \leftrightarrow P_{VO}$). This tests whether the learned subspaces for these distinct components are interchangeable or highly specialized.
    \item \textbf{Head-Shuffle:} Within each layer, we randomly shuffle the projection matrices among the different attention heads. This evaluates if the learned subspaces are specific to individual heads or are generalizable across all heads in a layer.
\end{itemize}

\begin{table}[t!]
\centering
\caption{Ablation results comparing our proposed projection method against several variants at a 50\% retention ratio. Our proposed method consistently outperforms all others, highlighting the importance of its data-driven and component-specific nature. Acronyms: HS (HellaSwag), WN (Winogrande), TQA (TruthfulQA), ARC-C (ARC Challenge), WT (WikiText).}
\label{tab:ablation_projection}
\begin{tabular}{l|cccccc|c}
\toprule
\textbf{Projection Method} & \textbf{MMLU} $\uparrow$ & \textbf{HS} $\uparrow$ & \textbf{WN} $\uparrow$ & \textbf{TQA} $\uparrow$ & \textbf{ARC-C} $\uparrow$ & \textbf{WT} $\downarrow$ & \textbf{Avg Perf.} $\uparrow$ \\
\midrule
\textbf{Our Projection} & \textbf{0.66} & \textbf{0.60} & \textbf{0.73} & \textbf{0.54} & \textbf{0.58} & \textbf{9.46} & \textbf{0.62} \\
Head-Shuffle & 0.60 & 0.57 & 0.70 & 0.50 & 0.56 & 10.72 & 0.59 \\
Layer-Shuffle & 0.59 & 0.57 & 0.68 & 0.51 & 0.56 & 10.85 & 0.58 \\
KV-Shuffle & 0.58 & 0.57 & 0.69 & 0.49 & 0.54 & 11.24 & 0.57 \\
Random Projection & 0.57 & 0.57 & 0.68 & 0.49 & 0.54 & 11.13 & 0.57 \\

\bottomrule
\end{tabular}
\end{table}

The results in Table~\ref{tab:ablation_projection} unequivocally demonstrate the superiority of our tailored projection method. Our approach outperforms all ablation variants across every benchmark, confirming that the specificity of the projection is crucial for retaining model performance.

\textbf{Key Findings:}
\begin{itemize}
    \item The \textbf{Random Projection} yields the most significant performance degradation, proving that the projection matrix must be data-driven and derived from the model's actual activation patterns. A generic orthogonal basis is insufficient for identifying and preserving salient information.
    \item The performance drops from \textbf{shuffling across layers, heads, and KV-components} confirm that the learned low-dimensional subspaces are highly specialized. This validates our approach of creating distinct projection matrices for each specific component (per layer, per head, and for QK/VO subspaces separately), as these structures are not interchangeable.
    \item Even the seemingly minor drop in the \textbf{Head-Shuffle} experiment highlights the fine-grained nature of attention. Each head learns to focus on different aspects of the input, and our method successfully captures this specialized structure.
\end{itemize}
In conclusion, this ablation study validates that the performance of our method is not merely a consequence of using an orthogonal transformation but is critically dependent on our careful, data-driven process for constructing matrices that are specific to each component of the attention mechanism.

\subsection{Detailed Results on Additional NLP Benchmarks}
\label{appendix:nlp_details}
This section provides the detailed performance plots for the NLP benchmarks not included in the main body of the paper, evaluated on both \texttt{Llama-3.1-8B-Instruct} (top row) and \texttt{OLMoE-1B-7B-0924-Instruct} (bottom row). The results on commonsense reasoning (Winogrande, HellaSwag), model truthfulness (TruthfulQA), and language modeling (WikiText) are fully consistent with the findings discussed in Section~\ref{sec:nlp_benchmarks}.

\begin{figure}[t!]
    \centering
    \includegraphics[width=\textwidth]{./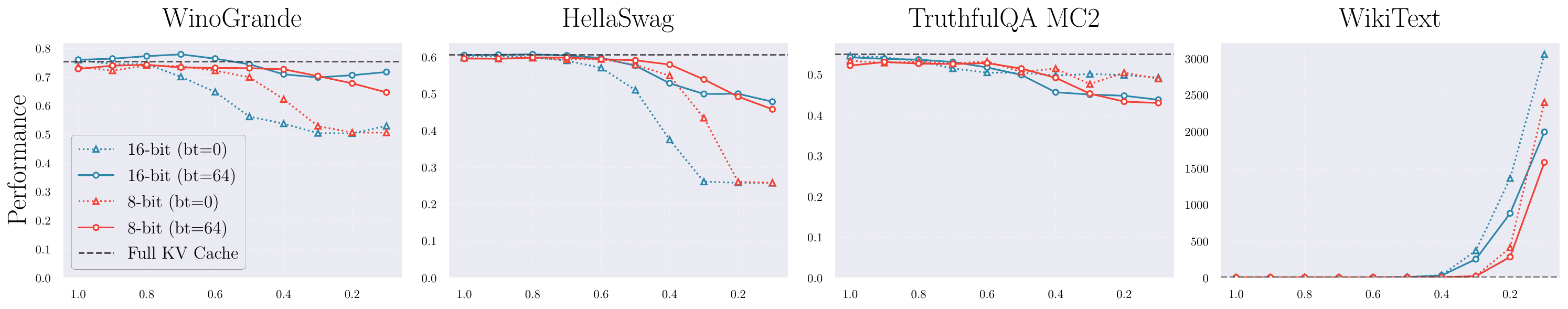}
    \includegraphics[width=\textwidth]{./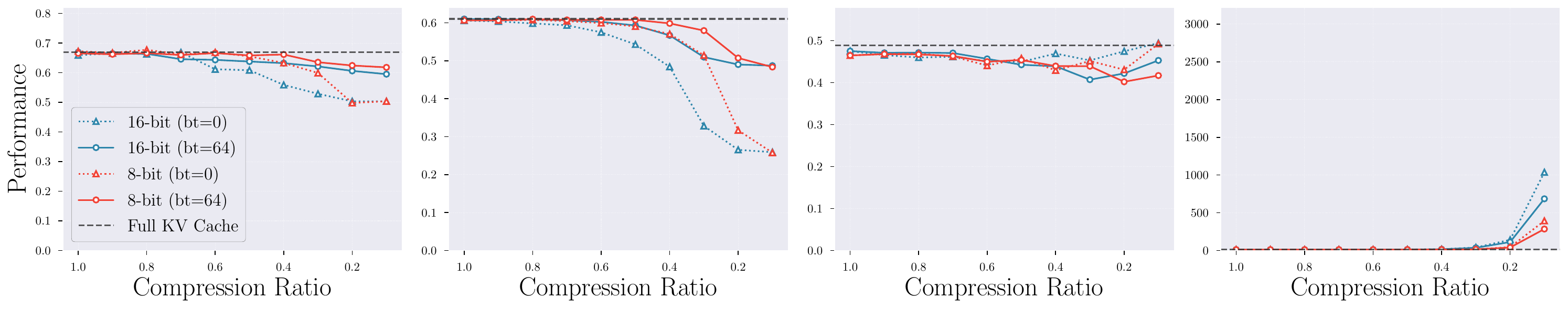}
    \caption{Detailed performance on additional NLP tasks for \texttt{Llama-3.1-8B-Instruct} (top row) and \texttt{OLMoE-1B-7B-0924-Instruct} (bottom row). The figure displays results for Winogrande, HellaSwag, TruthfulQA MC2, and WikiText. The trends confirm the critical role of the dense buffer in preserving performance across different task types and model architectures.}
    \label{fig:nlp_appendix_details}
\end{figure}

The detailed results in Figure~\ref{fig:nlp_appendix_details} not only reinforce our main findings but also reveal fascinating, nuanced interactions between SWAN, model architecture, and task-specific requirements. A key insight emerges from comparing the Multi-Head Attention (MHA) of OLMoE with the Grouped-Query Attention (GQA) of Llama. OLMoE's architecture, with unique Key and Value projections for each head, is inherently more sparse than Llama's, where dense Key and Value vectors are shared across groups of queries. As SWAN is designed to directly exploit sparsity, it naturally exhibits a more graceful performance degradation on the sparser OLMoE architecture, a compelling trend observed across all tasks.

On the commonsense reasoning benchmarks, SWAN demonstrates exceptional resilience. For \textbf{Winogrande}, performance remains remarkably stable, showing very little drop even at the highest compression levels. Even the zero-buffer versions maintain their integrity until a 60\% compression ratio, highlighting the task's robustness to information pruning. For \textbf{HellaSwag}, the buffered variants show almost no performance loss down to a 40\% compression ratio, after which a clear threshold is crossed and performance drops sharply. This suggests a critical point of information density required for this specific reasoning task.

The \textbf{TruthfulQA MC2} benchmark presents a unique case. Both buffered and non-buffered variants are surprisingly stable, with only a minor, consistent drop after a 50\% compression ratio. This indicates that the information required for factual consistency is highly concentrated within the most energetic components of the KV vectors and is less dependent on the high-fidelity recent context provided by the buffer.

Finally, the highly sensitive \textbf{WikiText} perplexity benchmark serves as a powerful testament to our method's architectural advantages. While performance holds steady until a 40\% compression ratio for Llama and 30\% for OLMoE, the subsequent degradation is dramatically different. The perplexity spike on OLMoE is three times less severe than on Llama. This is a crucial piece of evidence supporting our central claim: SWAN’s design, which thrives on sparsity, is inherently more efficient and less disruptive on models with sparser attention mechanisms like MHA.

\subsection{Detailed LongBench Task Results}
\label{appendix:longbench_details}
This section provides detailed results for the remaining LongBench tasks, which confirm the conclusions drawn in the main paper. As shown in Figure~\ref{fig:longbench_appendix_details}, the trends are consistent across code completion (LCC), fine-grained classification (TREC), and passage retrieval.

Across all tasks, the 128-token buffer is essential for avoiding performance collapse, and the buffered SWAN variants exhibit a robust and graceful trade-off between compression and accuracy. The specific degradation patterns vary by task, suggesting different sensitivities to information loss. For instance, performance on the TREC classification task shows a particularly sharp drop after 50\% compression, indicating a high reliance on specific details that are pruned more aggressively at that point. Nonetheless, the overall effectiveness of our hybrid-cache approach is validated across these diverse, long-context challenges.

\begin{figure}[t!]
    \centering
    \includegraphics[width=\textwidth]{./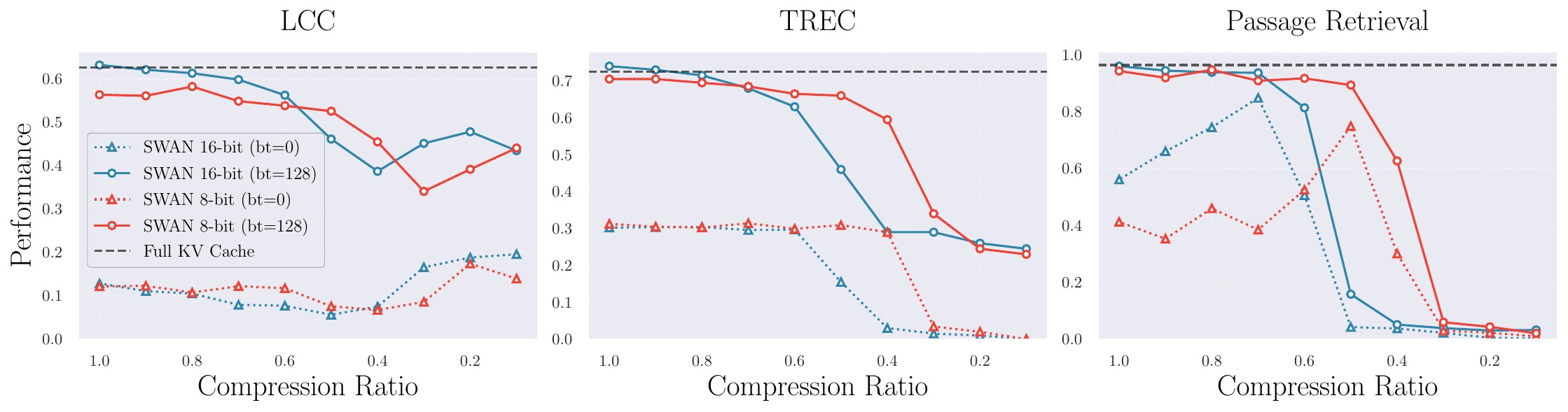}
    \caption{Performance on additional LongBench tasks. The buffered SWAN (`bt=128') confirms its effectiveness on code completion (LCC), classification (TREC), and passage retrieval, demonstrating the generalizability of our approach for diverse long-context scenarios.}
    \label{fig:longbench_appendix_details}
\end{figure}

\end{document}